\documentclass[english]{article}
\usepackage[T1]{fontenc}
\usepackage[latin9]{inputenc}
\usepackage{amsthm}
\usepackage{amsmath}
\usepackage{amssymb}
\usepackage{mathdots}
\PassOptionsToPackage{version=3}{mhchem}
\usepackage{mhchem}
\usepackage{esint}

\makeatletter
\theoremstyle{plain}
\newtheorem{thm}{\protect\theoremname}
  \theoremstyle{plain}
  \newtheorem{prop}[thm]{\protect\propositionname}


\makeatother

\usepackage{babel}
  \providecommand{\propositionname}{Proposition}
\providecommand{\theoremname}{Theorem}

\begin{document}

\global\long\def\rvx{\mathsf{x}}
\global\long\def\rvy{\mathsf{y}}

\global\long\def\tensord{\bigotimes_{i=1}^{d}}
\global\long\def\cco{C_{\rvy\rvx}}
\global\long\def\ncco{V_{\rvy\rvx}}
\global\long\def\cox{C_{\rvx\rvx}}
\global\long\def\coy{C_{\rvy\rvy}}
\global\long\def\rkhsx{\mathcal{H}_{k_{\mathcal{X}}}}
\global\long\def\rkhsy{\mathcal{H}_{k_{\mathcal{Y}}}}
\global\long\def\rangesgmx{\overline{Range(C_{\rvx\rvx})}}
\global\long\def\rangesgmy{\overline{Range(C_{\rvy\rvy})}}

\title{A simpler condition for consistency of a kernel independence test}

\author{Arthur Gretton}
\maketitle
\begin{abstract}
A statistical test of independence may be constructed using the Hilbert-Schmidt
Independence Criterion (HSIC) as a test statistic. The HSIC is defined
as the distance between the embedding of the joint distribution, and
the embedding of the product of the marginals, in a Reproducing Kernel
Hilbert Space (RKHS). It has previously been shown that when the kernel
used in defining the joint embedding is characteristic (that is, the
embedding of the joint distribution to the feature space is injective),
then the HSIC-based test is consistent. In particular, it is sufficient
for the product of kernels on the individual domains to be characteristic
on the joint domain. In this note, it is established via a result
of Lyons (2013) that HSIC-based independence tests are consistent
when kernels on the marginals are characteristic on their respective
domains, even when the product of kernels is not characteristic on
the joint domain.
\end{abstract}

\section{Introduction}

The Hilbert-Schmidt Independence Criterion \cite{GreBouSmoSch05}
provides a measure of dependence between random variables $X$ on
domain $\mathcal{X}$, and $Y$ on domain $\mathcal{Y}$, with joint
probability measure $P_{XY}$ on $\mathcal{X}\times\mathcal{Y}$.
This dependence measure may be used in statistical tests of dependence
\cite{GreFukTeoSonetal08,GreGyo10}. The simplest way to understand
HSIC is as the distance between an embedding of the joint distribution
and the product of the marginals, to an appropriate feature space
\cite{SmoGreSonSch07,GreBorRasSchetal12}, which is in our case a
reproducing kernel Hilbert space. The distance covariance of \cite{SzeRizBak07}
is a special case, for a particular choice of kernel \cite{SejSriGreFuk13}.
We say the feature space is characteristic when the embedding is injective,
and uniquely identifies probability measures \cite{SriGreFukLanetal10,FukGreSunSch08,SriFukLan11}.
A test based on HSIC is consistent when product of kernels on the
domains being compared is characteristic to the joint domain \cite[Theorem 3]{FukGreSunSch08}.
This is shown to be the case e.g. when Gaussian kernels are used on
each of the domains.

We propose a simpler condition: namely, that the kernels on each of
the individual domains $\mathcal{X}$ and $\mathcal{Y}$ should be
characteristic to those domains. The result is a direct consequence
of \cite[Lemma 3.8]{Lyons13}. The result is of particular interest
since it may be easier to define characteristic kernels on individual
domains than on the joint domain. For example, characteristic kernels
may be defined on the group of orthogonal matrices \cite[Section 4]{FukSriGreSch09},
and on the semigroup of vectors of non-negative reals \cite[Section 5]{FukSriGreSch09},
however a kernel jointly characteristic to both domains (i.e., to
orthogonal matrix/non-negative vector pairs) is harder to define.

\section{Results}

We begin with a result from \cite{SriFukLan11} that characteristic,
translation invariant kernels provide injective embeddings of finite
signed measures.
\begin{prop}
[Injective embeddings of finite signed measures]\label{prop:Injective-embeddings}
Let $\mathcal{X}$ be a Polish, locally compact Hausdorff space. Let
$k(x,y)$ be a $c_{0}$-kernel, i.e. a bounded kernel for which $k(x,\cdot)\in C_{0}(\mathcal{X})\quad\forall x$,
where $C_{0}(\mathcal{X})$ is the class of continuous functions on
$\mathcal{X}$ that vanish at infinity.%
\footnote{Continuous functions vanishing at infinity are members of $f\in C(\mathcal{X})$
such that for all $\varepsilon>0$ the set $\left\{ x\::\:\left|f(x)\right|\ge\varepsilon\right\} $
is compact.%
} Assume $k(x,y)=k(x-y)$, i.e. the kernel is translation invariant.
Define as $\mathcal{F}$ the RKHS induced by $k$. The following statements
are equivalent:
\begin{enumerate}
\item $k$ is characteristic
\item The embedding of a finite signed Borel measure $\mu\in\mathcal{M}_{b}(\mathcal{X})$,
defined as
\begin{equation}
\mu\mapsto\int_{\mathcal{X}}k(\cdot,x)d\mu(x),\label{eq:injectiveFiniteSignedEmbedding}
\end{equation}
 is injective.
\end{enumerate}
\end{prop}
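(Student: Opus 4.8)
The plan is to prove the nontrivial implication $(1)\Rightarrow(2)$; the converse $(2)\Rightarrow(1)$ is immediate, since probability measures form a subset of $\mathcal{M}_{b}(\mathcal{X})$, so injectivity of the embedding on all finite signed measures in particular yields injectivity on probability measures, which is exactly the definition of $k$ being characteristic. The entire content therefore lies in upgrading injectivity on the subclass of measures arising as differences $P-Q$ of probability measures (equivalently, signed measures of total mass zero) to injectivity on all of $\mathcal{M}_{b}(\mathcal{X})$.

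To obtain a workable criterion for when an embedding vanishes, I would pass to the spectral domain. Since $k$ is a bounded continuous translation-invariant kernel on the locally compact abelian group $\mathcal{X}$ (the case $\mathcal{X}=\Re^{d}$ being the prototype), Bochner's theorem supplies a finite nonnegative measure $\Lambda$ on the dual group $\widehat{\mathcal{X}}$ with $k(x-y)=\int e^{-i\langle\omega,\,x-y\rangle}\,d\Lambda(\omega)$. Substituting this into the squared RKHS norm of the embedding of $\mu\in\mathcal{M}_{b}(\mathcal{X})$ and applying Fubini, I expect to obtain
\[
\left\|\int_{\mathcal{X}}k(\cdot,x)\,d\mu(x)\right\|_{\mathcal{F}}^{2}=\int\left|\widehat{\mu}(\omega)\right|^{2}d\Lambda(\omega),
\]
where $\widehat{\mu}(\omega)=\int e^{-i\langle\omega,x\rangle}d\mu(x)$ is the continuous, bounded characteristic function of $\mu$. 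Hence the embedding of $\mu$ is zero if and only if $\widehat{\mu}$ vanishes $\Lambda$-almost everywhere, and by continuity of $\widehat{\mu}$ this is equivalent to $\widehat{\mu}$ vanishing on $\mathrm{supp}(\Lambda)$.

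With this dictionary in place, both statements become assertions about the closed set $S=\mathrm{supp}(\Lambda)$. Injectivity on all of $\mathcal{M}_{b}(\mathcal{X})$ (statement 2) says precisely that $\widehat{\mu}|_{S}=0\Rightarrow\mu=0$. Being characteristic (statement 1) imposes the same implication but only over $\mu$ of total mass zero; since total mass zero is the condition $\widehat{\mu}(0)=0$, this is exactly the implication $\widehat{\mu}|_{S\cup\{0\}}=0\Rightarrow\mu=0$. I would then show that for a closed set $S$ the property ``$\widehat{\mu}|_{S}=0$ forces $\mu=0$'' holds if and only if $S$ is dense, i.e.\ $S=\widehat{\mathcal{X}}$: density plus continuity of $\widehat{\mu}$ gives $\widehat{\mu}\equiv0$ and hence $\mu=0$ by uniqueness of the Fourier transform, while if $S$ omits an open set one constructs a nonzero band-limited signed measure (an inverse Fourier transform of a smooth bump supported off $S$) whose transform vanishes on $S$. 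Applying this to statement 1, being characteristic forces $S\cup\{0\}=\widehat{\mathcal{X}}$, hence $S\supseteq\widehat{\mathcal{X}}\setminus\{0\}$, and because $S$ is \emph{closed} it must contain the closure of this set, namely all of $\widehat{\mathcal{X}}$; thus $S=\widehat{\mathcal{X}}$, which is exactly the condition certifying statement 2. This closes the loop.

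The main obstacle is precisely the single frequency $\omega=0$, i.e.\ the gap between total-mass-zero signed measures (all that the characteristic property constrains) and arbitrary finite signed measures (where nonzero total mass corresponds to $\widehat{\mu}(0)\neq0$). The conceptual resolution is that this gap is illusory: the support of a measure is closed, so a spectral support containing every frequency except possibly the origin is forced to contain the origin as well. The remaining steps --- justifying Bochner's theorem on the relevant group and carefully constructing the band-limited counterexample measure to certify density --- are routine but are where measure-theoretic care is needed; in particular one must ensure the counterexample can be taken of total mass zero (a bump supported away from $0$) when testing the characteristic property.
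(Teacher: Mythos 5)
Your proposal is correct in substance but takes a genuinely different route from the paper. The paper does not argue directly at all: it obtains the proposition by chaining two cited results of Sriperumbudur et al., namely that an RKHS is $c_{0}$-universal iff the embedding \eqref{eq:injectiveFiniteSignedEmbedding} is injective on $\mathcal{M}_{b}(\mathcal{X})$, and that translation-invariant kernels are $c_{0}$-universal iff they are characteristic, with $c_{0}$-universality serving as the bridging notion. You instead reconstruct the Fourier-analytic argument underlying those citations: Bochner's theorem, the identity $\bigl\|\int k(\cdot,x)\,d\mu(x)\bigr\|_{\mathcal{F}}^{2}=\int|\widehat{\mu}|^{2}\,d\Lambda$, the reduction of both statements to conditions on $S=\mathrm{supp}(\Lambda)$ (injectivity on $\mathcal{M}_{b}$ iff $S=\widehat{\mathcal{X}}$; characteristic iff $S\cup\{0\}=\widehat{\mathcal{X}}$), and the topological observation that a closed set containing $\widehat{\mathcal{X}}\setminus\{0\}$ must contain $0$. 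What this buys is a self-contained proof that makes the real content visible --- the characteristic property only constrains $\widehat{\mu}$ away from the zero frequency, and closedness of the spectral support bridges that single-point gap --- which the paper's citation chain leaves entirely opaque. Your attention to taking the band-limited counterexample of total mass zero (bump supported away from the origin) is exactly the care needed in the non-density direction.

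One caveat deserves explicit mention: your closure step $\overline{\widehat{\mathcal{X}}\setminus\{0\}}=\widehat{\mathcal{X}}$ requires that $0$ not be an isolated point of the dual group, i.e.\ that $\widehat{\mathcal{X}}$ be non-discrete, equivalently that $\mathcal{X}$ be non-compact. On $\mathbb{R}^{d}$, the setting of the results the paper cites, this is automatic. But the proposition's literal hypotheses admit compact groups, e.g.\ the circle, where the dual $\mathbb{Z}$ is discrete and the implication $(1)\Rightarrow(2)$ genuinely fails: the kernel $k(x-y)=\sum_{n\neq0}c_{n}e^{in(x-y)}$ with $c_{n}=c_{-n}>0$ summable is a bounded continuous (hence $c_{0}$-) translation-invariant kernel that is characteristic, since probability measures automatically agree at the zero frequency, yet it embeds Haar measure to zero, so the embedding is not injective on $\mathcal{M}_{b}$. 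Thus your argument breaks down exactly where the stated proposition itself is false; to make the proof airtight you should state that you work on $\mathbb{R}^{d}$ (or a non-compact LCA group), which is also the setting in which the cited results of Sriperumbudur et al.\ actually hold.
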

This result may be obtained by combining \cite[Proposition 2]{SriFukLan11},
which states that an RKHS is $c_{0}$-universal iff the embedding
in \eqref{eq:injectiveFiniteSignedEmbedding} is injective, with the
result in \cite[Section 3.2]{SriFukLan11} that translation invariant
kernels are $c_{0}$-universal iff they are characteristic.

This being the case, a minor adaptation of the proof of \cite[Lemma 3.8]{Lyons13}
leads to the following result.
\begin{thm}
[Characteristic kernels and independence measures] Let $k$ and
$l$ be kernels for the respective RKHSs $\mathcal{F}$ on $\mathcal{X}$
and $\mathcal{G}$ on $\mathcal{Y}$, with respective feature maps
$\phi$ and $\psi$. Assume both $k$ and $l$ are characteristic,
translation invariant $c_{0}$-kernels, satisfying the conditions
of Proposition \ref{prop:Injective-embeddings}. Define the finite
signed measure
\[
\theta:=P_{XY}-P_{X}P_{Y}.
\]
Define the covariance operator as the embedding of this signed measure
into the tensor space%
\footnote{The tensor product is defined such that $\left(a\otimes b\right)c=\left\langle b,c\right\rangle _{\mathcal{G}}a$,
$\forall a\in\mathcal{F},\: b.c\in\mathcal{G}$.%
} $\psi(y)\otimes\phi(x)$,
\[
C_{YX}=\int_{\mathcal{X}\times\mathcal{Y}}\psi(y)\otimes\phi(x)d\theta(x,y).
\]
Then $C_{YX}=0$ iff $\theta=0$.\end{thm}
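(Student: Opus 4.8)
The plan is to prove the two implications separately. The direction $\theta=0\Rightarrow C_{YX}=0$ is immediate, since $C_{YX}$ is an integral against $\theta$. All the content is in the converse, and the point is to deduce $\theta=0$ from $C_{YX}=0$ \emph{without} assuming that the product kernel is characteristic on $\mathcal{X}\times\mathcal{Y}$: I only want to use the injectivity of the embeddings on each marginal domain granted by Proposition~\ref{prop:Injective-embeddings}, peeling off one variable at a time in the spirit of \cite[Lemma 3.8]{Lyons13}.

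First I would rewrite $C_{YX}=0$ as a bilinear condition. For $f\in\mathcal{F}$, the tensor convention and the reproducing property give $(\psi(y)\otimes\phi(x))f=\langle\phi(x),f\rangle_{\mathcal{F}}\,\psi(y)=f(x)\,\psi(y)$, whence $\langle g,C_{YX}f\rangle_{\mathcal{G}}=\int_{\mathcal{X}\times\mathcal{Y}}f(x)g(y)\,d\theta(x,y)$ for every $g\in\mathcal{G}$. Thus $C_{YX}=0$ is equivalent to $\int f(x)g(y)\,d\theta(x,y)=0$ for all $f\in\mathcal{F}$ and $g\in\mathcal{G}$.

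The key step is to marginalise one variable at a time. Fix $v\in\mathcal{Y}$ and define the weighted marginal $\nu_v\in\mathcal{M}_b(\mathcal{X})$ by $\int_{\mathcal{X}}f\,d\nu_v=\int_{\mathcal{X}\times\mathcal{Y}}f(x)\,l(v,y)\,d\theta(x,y)$; since $l$ is bounded and $\theta$ is finite, $\nu_v$ is a well-defined finite signed measure. Taking $g=l(\cdot,v)$ above and then $f=k(\cdot,u)$ shows that the embedding $\int_{\mathcal{X}}k(\cdot,x)\,d\nu_v(x)$ vanishes, so $\nu_v=0$ by the injectivity asserted in Proposition~\ref{prop:Injective-embeddings} for $k$. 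Evaluating $\nu_v$ on indicators yields $\int_{A\times\mathcal{Y}}l(v,y)\,d\theta(x,y)=0$ for every Borel $A\subseteq\mathcal{X}$ and every $v\in\mathcal{Y}$. I would then fix $A$ and set $\lambda_A\in\mathcal{M}_b(\mathcal{Y})$, $\lambda_A(B)=\theta(A\times B)$; the last identity says the embedding of $\lambda_A$ under $l$ vanishes, so $\lambda_A=0$ by Proposition~\ref{prop:Injective-embeddings} for $l$, i.e. $\theta(A\times B)=0$ for all measurable rectangles.

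To conclude I would invoke uniqueness of finite signed measures agreeing on a generating $\pi$-system: the rectangles form a $\pi$-system generating the product $\sigma$-algebra and contain the whole space, so $\theta=0$ (applying the $\pi$--$\lambda$ argument to the positive and negative parts of $\theta$). I expect the main difficulty to be bookkeeping rather than any deep step --- checking that $\nu_v$ and $\lambda_A$ are genuine finite signed measures and that the passage between the bilinear form, its evaluation on indicators, and the partial marginals is justified (a monotone-class/Fubini argument for the bounded weights $l(v,\cdot)$ against the finite measure $\theta$). The conceptual crux, namely avoiding any appeal to characteristicness of $k\cdot l$ on the joint space, is dispatched simply by applying Proposition~\ref{prop:Injective-embeddings} once on each factor.
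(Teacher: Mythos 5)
Your proposal is correct and follows essentially the same argument as the paper: both reduce $C_{YX}=0$ to the vanishing of the bilinear form $\int f(x)g(y)\,d\theta$, then apply the injectivity of Proposition~\ref{prop:Injective-embeddings} twice --- once per factor --- to conclude $\theta(A\times B)=0$ on all measurable rectangles, hence $\theta=0$. The only differences are cosmetic (you peel the variables in the mirror-image order, test with kernel sections $k(\cdot,u)$, $l(\cdot,v)$ instead of arbitrary $f\in\mathcal{F}$ and indicators, and make explicit the final $\pi$--$\lambda$ step that the paper leaves implicit).
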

\begin{proof}
The result $\theta=0\implies C_{YX}=0$ is straightforward. We now
prove the other direction. For every $f\in\mathcal{F}$ and $B\in\sigma(\mathcal{Y})$,
we define the finite signed Borel measure
\[
\nu_{f}(B)=\int_{\mathcal{X}\times\mathcal{Y}}\left\langle \phi(x),f\right\rangle _{\mathcal{F}}\mathbb{I}_{B}(y)d\theta(x,y),
\]
where $\mathbb{I}_{B}(\cdot)$ is the indicator of the set $B.$ The
embedding of this measure to $\mathcal{G}$ is injective, and is written
\begin{align*}
\mu_{\nu_{f}} & =\int\psi(y)\left\langle \phi(x),f\right\rangle _{\mathcal{F}}d\theta(x,y)\\
 & =\int\left(\psi(y)\otimes\phi(x)\right)f\: d\theta(x,y)\\
 & =\left[\int\left(\psi(y)\otimes\phi(x)\right)d\theta(x,y)\right]f\\
 & =C_{YX}f=0,
\end{align*}
where we have used the linearity of the tensor product
\[
(a\otimes b)c=T_{c}(a\otimes b)=\left\langle b,c\right\rangle a.
\]
Since the embedding $\mu_{\nu_{f}(B)}$ is injective, we have that
$\nu_{f}=0$. Since this is true for all $f\in\mathcal{F}$, we have
that
\[
\int_{\mathcal{X}\times\mathcal{Y}}\phi(x)\mathbb{I}_{B}(y)d\theta(x,y)=0.
\]
Define the finite signed measure on $A$, $\nu_{B}(A)=\theta(A\times B)$.
The above equation can be interpreted as the embedding of this measure
to $\mathcal{F}$,
\[
\mu_{\nu_{B}}=\int_{\mathcal{X}\times\mathcal{Y}}\phi(x)\mathbb{I}_{B}(y)d\theta(x,y)=0,
\]
hence $\nu_{B}=0$, given that the embedding $\mu_{\nu_{B}}$ is injective.
We conclude that $\theta(A\times B)=0$ for all Borel sets $A,B$,
and hence $\theta=0$.
\end{proof}
An important point to note is that the embedding of $\theta$ need
not be characteristic to all probability measures: only the embeddings
of each of the individual dimensions $\mathcal{X}$ and $\mathcal{Y}$
need be characteristic. A second point is that a consistent test still
requires characteristic kernels on both domains; it is not sufficient
for one domain alone to have a characteristic kernel. A simple example
can be used to illustrate the resulting failure mode: $\mathcal{X}:=\mathbb{R}$
with a characteristic kernel, $\mathcal{Y}:=\mathbb{R}$ with the
linear kernel $l(y_{1},y_{2})=y_{1}y_{2}$, and points are distributed
uniformly on a circular ring centered at the origin. The data are
dependent, but HSIC with these kernels will not detect this dependence.

\textbf{Acknowledgements:} Thanks to Joris Mooij, Jonas Peters, Dino
Sejdinovic, and Bharath Sriperumbudur for helpful discussions.

\bibliographystyle{plain}
\bibliography{bibfile}
\end{document}